\documentclass[11pt]{article}
\usepackage[T1]{fontenc}
\usepackage{lmodern}
\usepackage[margin=1in]{geometry}
\usepackage{amsmath,amssymb,amsthm,mathtools}
\usepackage{booktabs,microtype}
\usepackage{graphicx}
\usepackage[table]{xcolor}
\usepackage[section]{placeins}
\usepackage[numbers,sort&compress]{natbib}
\usepackage[colorlinks=true,linkcolor=blue,citecolor=blue,urlcolor=blue]{hyperref}
\usepackage{url}
\newcommand{\mask}{\textsc{mask}}

\newcommand{\ind}{\mathbf{1}}
\newcommand{\R}{\mathbb{R}}
\newcommand{\tdlm}{\textsc{TDLM}}
\definecolor{medalgold}{HTML}{F4E7B5}
\definecolor{medalsilver}{HTML}{E6E8EB}
\definecolor{medalbronze}{HTML}{E9CFBC}

\newtheorem{proposition}{Proposition}
\title{TDLM: Adaptive Temporal Integration for Diffusion Language Models}
\author{
Bohang Sun$^{1}$
\quad Max Zhu$^{1}$
\quad Francesco Caso$^{1}$ 
\quad Christopher Irwin$^{1}$\\ 
\quad Jialin Yu$^{2}$\thanks{ \texttt{yu.jialin@outlook.com}.}
\quad Pietro Li\`o$^{1}$\thanks{ \texttt{pl219@cam.ac.uk}.} \\[0.75em]
{\small $^{1}$Department of Computer Science and Technology, University of Cambridge} \\
{\small $^{2}$Department of Engineering Science, University of Oxford}
}
\date{}

\begin{document}
\maketitle
\begin{abstract}
Masked diffusion language models repeatedly compute representations for unresolved tokens, but conventional denoisers do not explicitly retain these representations for reuse across steps. We introduce Temporal Self-Conditioning Diffusion Language Model (TDLM), which lets each unresolved position attend to its own history of hidden representations within a frozen denoiser. This gives later predictions access to computation already performed. Training uses self-generated trajectories and a simple cross-entropy objective for tokens revealed between consecutively sampled decoding steps, while causal temporal attention permits parallel evaluation during training. On LLaDA, the temporal modules add only approximately 0.84\% trainable parameters. Across LLaDA and Dream, the main comparison shows up to 7.15$\times$ faster generation than the corresponding original diffusion language model, with competitive answer quality on the evaluated tasks. These results motivate temporal memory as a practical way to make more effective use of repeated denoising computation.
\end{abstract}

\section{Introduction}
\label{sec:introduction}

Masked diffusion language models offer a route to parallel text generation: a bidirectional denoiser predicts multiple missing tokens, and a decoding policy decides which predictions to commit. LLaDA, iLLaDA, and Dream demonstrate the capabilities of diffusion language modeling at billion-parameter scale~\citep{nie2025llada,nie2026improvedlargelanguagediffusion,ye2025dream}. Nemotron-Labs-Diffusion and DiffusionGemma further develop efficient parallel generation while retaining support for autoregressive decoding~\citep{fu2026nemotronlabsdiffusiontrimodelanguagemodel,diffusiongemmateam2026diffusiongemmatechnicalreport}. The practical efficiency of diffusion generation depends on how many predictions can be committed reliably at each step. Committing too few requires repeated expensive forward passes; committing uncertain predictions can damage the entire answer. Confidence-based decoding exposes this tension directly through a threshold controlling parallelism~\citep{wu2025fastdllm}.

Consider a token position that remains masked across several denoising steps. Its discrete input has not changed, yet the model has repeatedly computed a rich representation of what could belong there. As other tokens become visible, these representations evolve. A conventional denoiser carries committed tokens into the next step, but provides no explicit route for an unresolved position to consult its earlier representations. The computation spent on these intermediate estimates has no persistent representation available for direct reuse.

This suggests a question about the use of inference computation: \emph{can a denoiser reuse its earlier internal estimates to reach reliable predictions sooner?} Historical representations do not provide new external evidence beyond the request and evolving response. Their potential value lies in preserving computations that the model has already performed under earlier contexts. If these computations help resolve uncertainty, more tokens may be committed per evaluation without a corresponding loss in answer quality. If they merely increase confidence, apparent acceleration may instead conceal additional errors. Both possibilities must be tested.

\begin{figure}[t]
\centering
\includegraphics[width=0.9\linewidth]{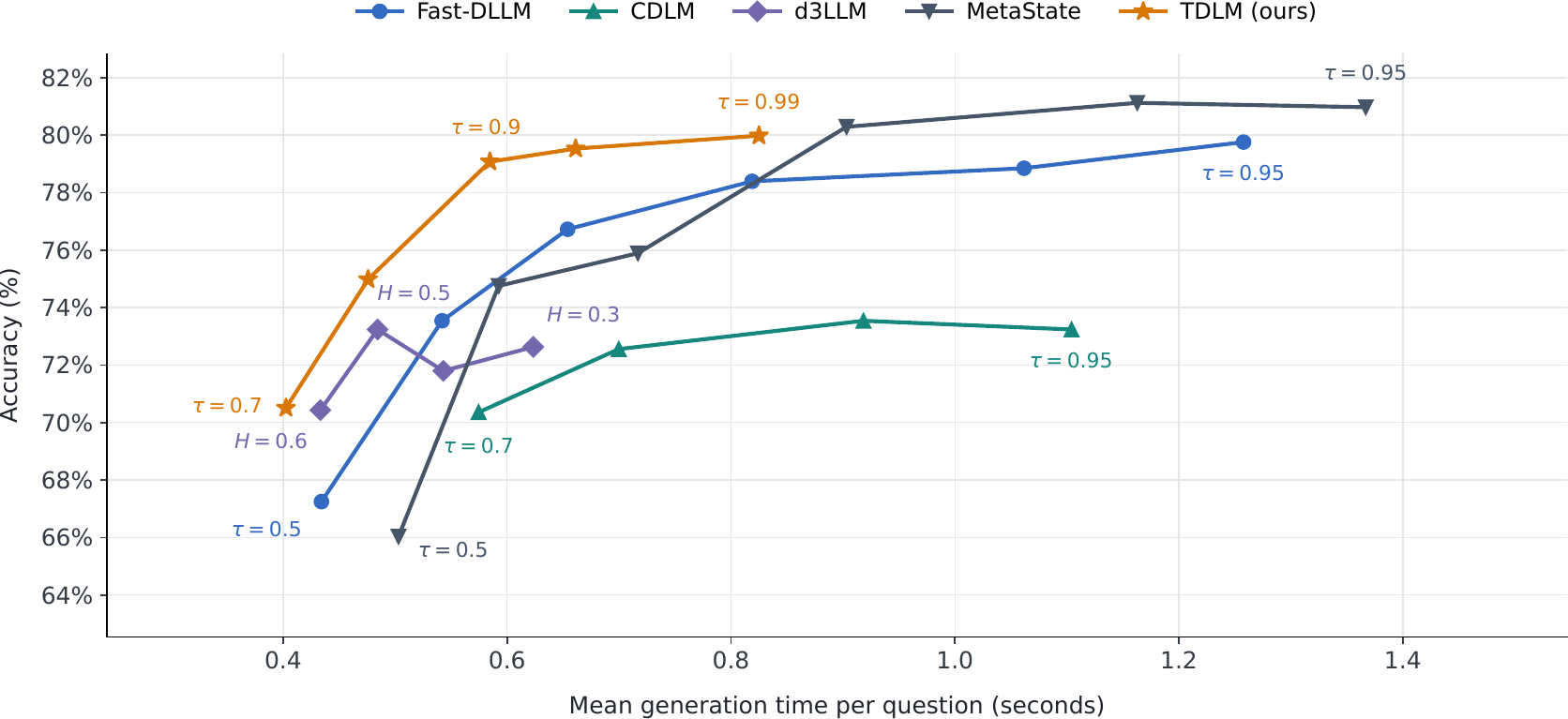}
\caption{\textbf{Accuracy--latency trade-off on GSM8K.} Threshold sweeps compare TDLM with the evaluated baselines.}
\label{fig:gsm8k-time-accuracy}
\end{figure}

We introduce \tdlm{}, a temporal self-conditioning extension to a pretrained diffusion language model. At selected Transformer layers, each unresolved position attends to its own representations from earlier denoising steps. This preserves the alignment between a pending prediction and the history used to refine it. Narrow temporal projections keep the adaptation small, while a learned zero-value sink allows the model to suppress unhelpful history. The backbone remains frozen. During blockwise generation, unresolved positions continue to accumulate history across block boundaries.

The training procedure is simple: the frozen backbone generates its own trajectories, and the added modules learn to predict the tokens revealed between consecutively sampled decoding steps using cross-entropy. This reuses the backbone's existing capabilities and the memory accumulated during denoising, while updating only a small set of parameters. It requires no new manually annotated reference answers. Strictly causal temporal attention permits the corresponding partially masked inputs to be evaluated in parallel during training.

Our experiments on LLaDA and Dream study generation efficiency, answer quality, and the role of the available history. Figure~\ref{fig:gsm8k-time-accuracy} illustrates the resulting quality--latency trade-off. We examine whether temporal reuse makes additional computation useful before a token is committed, and how restricting that history changes the model's predictions.

\begin{samepage}
Our contributions are:
\begin{itemize}
\item A lightweight, position-aligned temporal attention mechanism that makes earlier denoising representations available to unresolved predictions in a frozen language model.
\item A simple cross-entropy self-distillation objective for earlier prediction, with causal parallel evaluation of sampled trajectory inputs.
\item An empirical study of the quality--latency trade-off and the effects of restricting the history available to unresolved predictions.
\end{itemize}
\end{samepage}

\section{Method}
\label{sec:method}

TDLM lets each masked token position reuse hidden representations from earlier denoising steps. Spatial attention exchanges information across positions in the current input; temporal attention retrieves earlier representations of the same position. The backbone remains frozen, and only the added temporal modules are trained.

\subsection{Conditioning on denoising history}

Let $c$ be a prompt and $N$ the response length. Each decoding step evaluates the denoiser and reveals one or more tokens: their masks are replaced with predicted tokens, which are then kept fixed. We refer to this operation as commitment.

We call a partially revealed response sequence a \emph{state} and write it as $S_i$. Its associated global decoding step $s_i$ is the number of updates already completed. Thus, $i$ indexes the selected sequences, $s_i$ records their times along the decoding trajectory, and $p$ denotes a token position. During training we select a subset of teacher decoding steps; during inference we evaluate every successive input. The initial sequence is $S_0=\mask^N$ at $s_0=0$.

For a masked position $p$, TDLM extends the information available to predict its token $y_p$:
\begin{equation}
 p_\theta(y_p\mid c,S_i)
 \quad\longrightarrow\quad
 p_{\theta,\phi}(y_p\mid c,S_i,\mathcal H_{<i}),
 \label{eq:conditioning}
\end{equation}
where $\theta$ denotes the frozen backbone parameters, $\phi$ the temporal parameters, and $\mathcal H_{<i}$ the hidden representations retained from earlier evaluated inputs $S_j$, $j<i$. The input at position $p$ can remain a mask while its internal representation changes with the surrounding context. Temporal memory makes those earlier representations available to the next prediction.

\subsection{Temporal attention at masked token positions}

At a selected layer $\ell$, let $h_{\ell,p}(S_i)\in\R^d$ be the hidden representation at position $p$, after spatial attention and before the feed-forward sublayer. Define $m_{i,p}=\ind[(S_i)_p=\mask]$ and let $r_i=\sum_p(1-m_{i,p})$ count the revealed response tokens in $S_i$. A decoding step can reveal several tokens, so $r_i$ need not equal the step number $s_i$.

For a currently masked position, every earlier input also masks that position because revealed tokens are never remasked. Its layer-specific history $\mathcal H_{\ell,p}^{<i}$ contains earlier representations $h_{\ell,p}(S_j)$ and their progress coordinates $r_j$, with $j<i$. The current representation queries this history through a residual update:
\begin{equation}
 \widetilde h_{\ell,p}(S_i)
 =h_{\ell,p}(S_i)+m_{i,p}\,\operatorname{TempAttn}_{\ell}
 \!\left(h_{\ell,p}(S_i),\mathcal H_{\ell,p}^{<i};r_i\right).
 \label{eq:temporal}
\end{equation}
$\operatorname{TempAttn}$ uses standard multi-head attention with narrow projections, retrieving only earlier representations of the same token position at the same layer. Temporal rotary embeddings~\citep{su2021roformer} use revealed-token counts to encode decoding progress. A learned attention sink with zero value gives the module an option to ignore history. Appendix~\ref{app:temporal-details} gives the projection, normalization, and multi-head formulas.

\begin{figure}[t]
\centering
\includegraphics[width=0.95\linewidth]{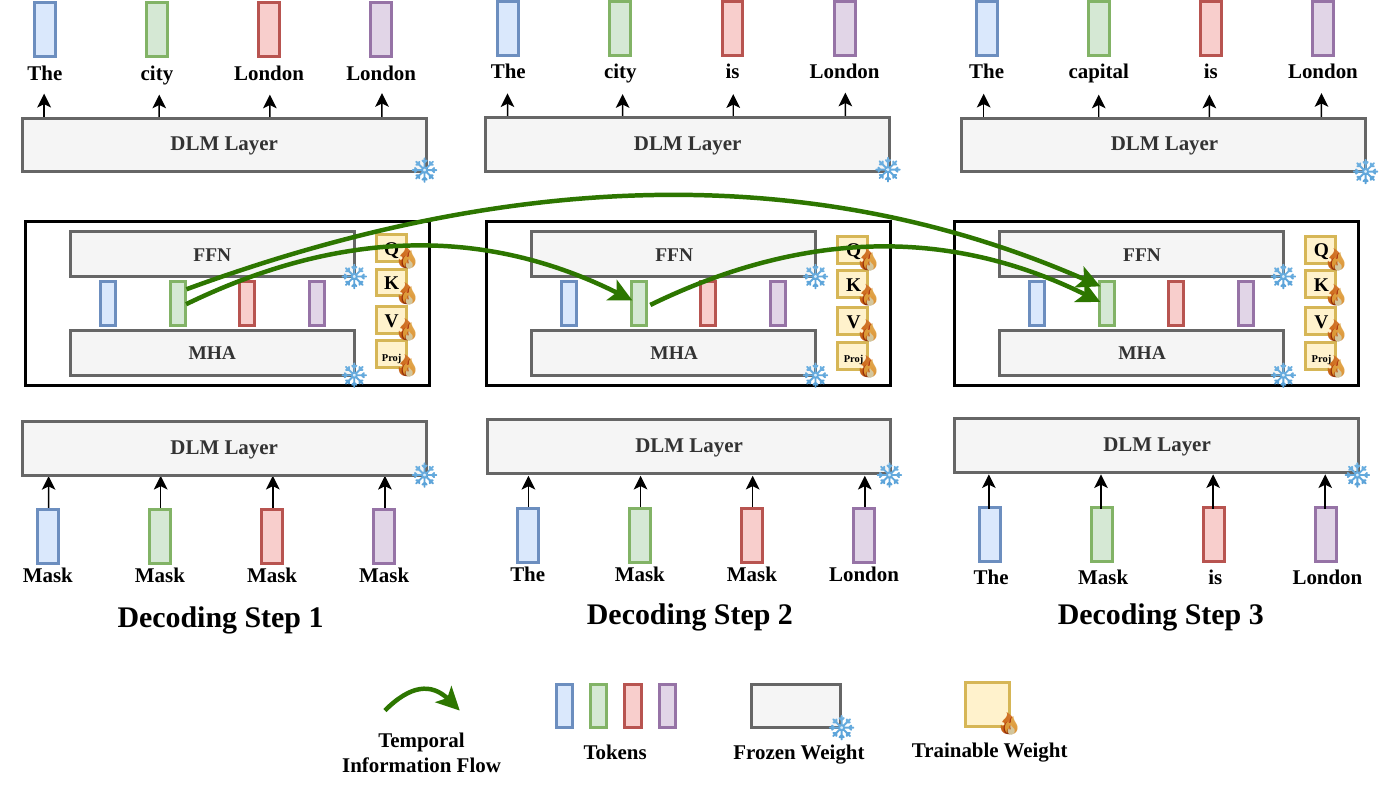}
\caption{\textbf{Reusing computation before commitment.}
Each column is a denoising evaluation, read from bottom to top. The bottom shows the partially revealed input; the top shows token predictions before commitment. Decoding Steps 1--3 use inputs $S_0,S_1,S_2$ at $s_0=0,s_1=1,s_2=2$, respectively. The green position remains masked while its predictions can change. Green arrows retrieve earlier hidden representations at that position and layer. Yellow modules are trainable, and the backbone is frozen.}
\label{fig:tdlm-overview}
\end{figure}

The original feed-forward sublayer processes the corrected representation $\widetilde h$. Keys and values are formed from $h$ before the same layer's temporal correction, a property used by parallel trajectory training below. Only masked token positions receive a correction. Empty history produces zero correction, and a zero-initialized output projection preserves the pretrained denoiser at initialization. The precise statements and proofs are in Appendix~\ref{app:properties}.

\subsection{Simple trajectory self-distillation}

For each prompt, the frozen backbone generates a teacher response in $T$ decoding steps. We record the final tokens $y_p$ and the step $a_p$ at which each token is revealed. These records let us reconstruct the input at any selected step $s_i$: tokens revealed by that step are visible, and later tokens remain masked,
\begin{equation}
 (S_i)_p=
 \begin{cases}y_p,&a_p\leq s_i,\\\mask,&a_p>s_i.\end{cases}
 \label{eq:state}
\end{equation}
Hidden representations are computed during training from these reconstructed inputs.

We always include $s_0=0$, sample $K-1$ distinct steps uniformly from $\{1,\ldots,T-1\}$, and sort them. We also set $s_K=T$ to mark the end of the trajectory:
\begin{equation*}
 0=s_0<s_1<\cdots<s_{K-1}<s_K=T.
\end{equation*}
The gap $n_i=s_{i+1}-s_i$ is the number of decoding steps between two selected sequences. Every decoding step reveals at least one token, so the number of masks strictly decreases from $S_i$ to $S_{i+1}$. We evaluate $S_0,\ldots,S_{K-1}$; the fully revealed $S_K$ only marks the end of the last supervision interval and requires no forward pass.

For an input $S_i$, the training targets are positions that are still masked in $S_i$ but revealed in the next selected sequence $S_{i+1}$. For example, if $s_i=2$ and $s_{i+1}=4$, we train the model to predict the tokens the teacher reveals at steps 3 and 4. Positions revealed later remain masked and contribute temporal history, but receive no loss on this input. Formally, the target positions are
\begin{equation}
 \mathcal T_i=\{p:(S_i)_p=\mask,\ (S_{i+1})_p\neq\mask\},
 \qquad 1\leq i<K.
 \label{eq:intervals}
\end{equation}
The initial input $S_0$ seeds temporal memory without supervision because its temporal correction is zero.

Let $\pi_{i,p}$ be TDLM's predicted token distribution at position $p$, conditioned on $c$, $S_i$, and $\mathcal H_{<i}$. We optimize only $\phi$, using cross-entropy averaged over all supervised token positions across the sampled inputs:
\begin{equation}
 \mathcal L_{\mathrm{CE}}(\phi)=
 \underset{\substack{1\leq i<K\\p\in\mathcal T_i}}{\operatorname{mean}}
 \operatorname{CE}(\pi_{i,p},y_p).
 \label{eq:loss}
\end{equation}
Each supervised token has equal weight. This objective teaches the model to predict the next interval's tokens using the current input and its accumulated history. Figure~\ref{fig:training-stage} illustrates the inputs and targets; Appendix~\ref{app:training} gives training details.

\begin{figure}[t]
\centering
\includegraphics[width=0.8\linewidth]{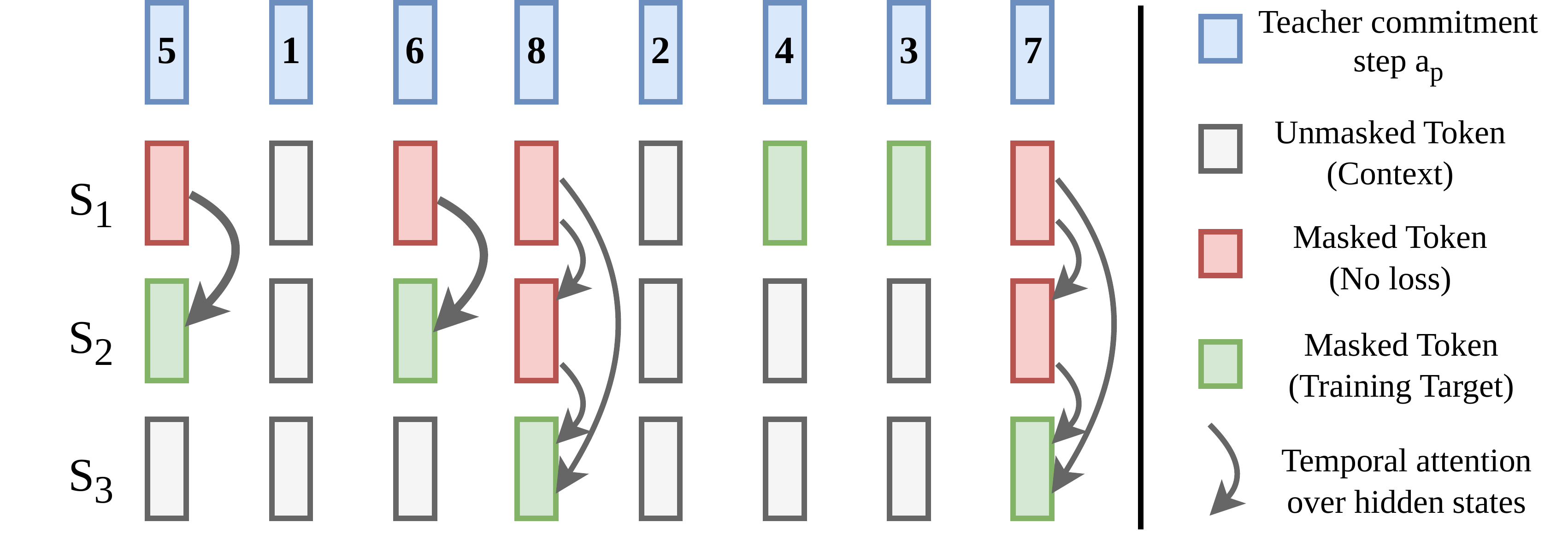}
\caption{\textbf{Sampled inputs and their training targets.} The top row gives the teacher's reveal step for each token position. The three input rows are $S_1,S_2,S_3$ at steps $s_1=2,s_2=4,s_3=6$. Gray positions are revealed context. Green positions are masked inputs whose teacher tokens become visible in the next selected sequence and are used as CE targets. Red positions are also masked, but receive no loss on that input. Arrows show reuse of earlier hidden representations at the same position and layer.}
\label{fig:training-stage}
\end{figure}

\paragraph{Parallel evaluation along a fixed trajectory.}
Spatial attention processes each reconstructed input independently. At a temporal layer, the causal mask $A_{p,i,j}=\ind[j<i]m_{j,p}$ lets the representation for $S_i$ retrieve from earlier inputs $S_j$ where position $p$ was masked. Equation~\eqref{eq:temporal} restricts the correction to currently masked positions. Since each layer's keys and values precede its temporal correction, all sampled inputs can be evaluated together layer by layer. With identical history selection and dropout disabled, this is equivalent in exact arithmetic to processing the same inputs sequentially (Appendix~\ref{app:properties}). Training computes no representations for unsampled steps; inference instead accumulates history from each actual forward pass.

\subsection{From retrieval to fewer evaluations}

At inference, let $\mathcal U_b$ be the set of masked token positions in the active block. For the current input $S_i$, let $\widehat y_p$ be the greedy prediction at position $p$ and $\gamma_p=\pi_{i,p}(\widehat y_p)$ its confidence. The decoder reveals tokens with confidence above a threshold $\tau$, or the most confident token otherwise:
\begin{equation}
 C=
 \begin{cases}
 \{p\in\mathcal U_b:\gamma_p>\tau\},&\text{if nonempty},\\
 \{\arg\max_{p\in\mathcal U_b}\gamma_p\},&\text{otherwise}.
 \end{cases}
 \label{eq:commit}
\end{equation}
TDLM and our Fast-dLLM policy baseline share this commitment rule. Original DLM reveals one token per evaluation. Decoding settings are specified together in Section~\ref{sec:setup}.

Each forward pass retrieves earlier temporal keys and values, then appends the current entries. All masked response positions accumulate history, including those outside the active block. Revealing a token invalidates that position's history; changing blocks does not reset the histories of other positions. The full spatial sequence is recomputed at every step.

With $L_t$ temporal layers and total temporal width $d_t=Md_h$, the module parameter count is $L_t(4dd_t+1)$. Retrieving from $H$ earlier evaluations costs $O(NHd_t)$ per temporal layer, in addition to the projections. Full-history memory requires $O(L_tNHd_t)$ storage, and unrestricted attention work grows quadratically with the number of denoising steps. TDLM is therefore useful when it saves enough backbone evaluations to offset this overhead. We measure both evaluation counts and wall time, together with answer quality, to assess that trade-off.

\section{Experiments}
\label{sec:experiments}

We evaluate the quality--latency trade-off of TDLM on two backbones and study the contribution of its temporal history.

\subsection{Experimental setup}
\label{sec:setup}

\paragraph{Models and training.}
We augment frozen LLaDA-8B-Instruct and Dream-v0-Instruct-7B backbones. LLaDA uses sixteen dispersed temporal layers with four heads of width 64; Dream uses fourteen middle layers with four heads of width 84. For each backbone, we train only the temporal modules on 3,500 self-generated trajectories for 750 optimizer updates using the simple CE objective in Eq.~\eqref{eq:loss}. These results therefore require a modest adaptation dataset and a straightforward self-distillation procedure. Appendix~\ref{app:training} gives the model and training details. Our MetaState baseline uses reference responses from the Tulu-3 SFT mixture as supervision~\citep{xia2026metastate}, whereas TDLM learns from responses generated by its own frozen backbone.

\paragraph{Tasks and scoring.}
We evaluate mathematical reasoning on MATH-500 and GSM8K, and code generation on MBPP sanitized~\citep{math500,cobbe2021gsm8k,austin2021mbpp,mbppsanitized}. We use the complete test sets of 500, 1,319, and 257 problems, respectively. Math answers are checked with Math-Verify, and code solutions are executed against the dataset tests. We report accuracy or pass@1 from one greedy generation per problem; unparseable answers count as incorrect.

\paragraph{Decoding and measurement.}
All methods in the main comparison use a generation budget of 256 tokens and block size 32. Confidence-based decoding uses $\tau=0.9$, following released evaluation settings~\citep{wu2025fastdllm,kim2025cdlm}; d3LLM uses its default entropy threshold $H=0.5$~\citep{qian2026d3llm}. On Dream, TDLM uses $\tau=0.95$ and d3LLM uses $H=0.4$. Original DLM commits one token per evaluation. All methods use compiled model forwards with batch size one on GH200 GPUs. We report mean denoiser evaluations (NFE) and generation seconds, excluding compilation and warmup. Speedup is measured against the original model of the same backbone on the same task. Further evaluation details are in Appendix~\ref{app:evaluation}.

\subsection{Generation efficiency and answer quality}
\label{sec:main-results}

\begin{table}[!htbp]
\centering
\small
\setlength{\tabcolsep}{2.7pt}
\setlength{\fboxsep}{1pt}
\renewcommand{\arraystretch}{1.08}
\begin{tabular}{@{}llrrrrrrrr@{}}
\toprule
\multicolumn{2}{c}{} & \multicolumn{4}{c}{LLaDA-8B-Instruct} & \multicolumn{4}{c}{Dream-v0-Instruct-7B} \\
\cmidrule(lr){3-6}\cmidrule(lr){7-10}
Dataset & Model & ACC (\%) $\uparrow$ & NFE & Sec. $\downarrow$ & Speedup & ACC (\%) $\uparrow$ & NFE & Sec. $\downarrow$ & Speedup \\
\midrule
MATH-500 & TDLM & \colorbox{medalbronze}{37.20} & 54.79 & \colorbox{medalgold}{0.829} & 4.26$\times$ & \colorbox{medalsilver}{44.20} & 74.62 & \colorbox{medalsilver}{1.194} & 3.05$\times$ \\
 & Fast-dLLM & \colorbox{medalgold}{38.60} & 107.20 & \colorbox{medalbronze}{1.535} & 2.30$\times$ & \colorbox{medalbronze}{44.00} & 113.50 & \colorbox{white}{1.646} & 2.21$\times$ \\
 & MetaState & \colorbox{white}{36.80} & 107.56 & \colorbox{white}{1.682} & 2.10$\times$ & \colorbox{white}{40.80}\rlap{$^{\dagger}$} & 115.88 & \colorbox{white}{1.671} & 2.18$\times$ \\
 & CDLM & \colorbox{white}{30.40} & 87.89 & \colorbox{white}{2.851} & 1.24$\times$ & \colorbox{white}{26.00} & 96.93 & \colorbox{medalbronze}{1.438} & 2.53$\times$ \\
 & d3LLM & \colorbox{white}{31.80} & 35.67 & \colorbox{medalsilver}{1.244} & 2.84$\times$ & \colorbox{white}{42.40} & 53.76 & \colorbox{medalgold}{0.820} & 4.45$\times$ \\
 & Original DLM & \colorbox{medalsilver}{38.40} & 256.00 & \colorbox{white}{3.535} & 1.00$\times$ & \colorbox{medalgold}{44.60} & 256.00 & \colorbox{white}{3.645} & 1.00$\times$ \\
\midrule
GSM8K & TDLM & \colorbox{medalsilver}{79.08} & 40.08 & \colorbox{medalgold}{0.585} & 5.91$\times$ & \colorbox{medalgold}{79.15} & 62.90 & \colorbox{medalsilver}{0.951} & 3.63$\times$ \\
 & Fast-dLLM & \colorbox{medalbronze}{78.85} & 76.72 & \colorbox{medalbronze}{1.062} & 3.26$\times$ & \colorbox{medalsilver}{79.00} & 97.35 & \colorbox{white}{1.329} & 2.60$\times$ \\
 & MetaState & \colorbox{medalgold}{81.12} & 77.36 & \colorbox{white}{1.163} & 2.97$\times$ & \colorbox{white}{75.21}\rlap{$^{\dagger}$} & 101.79 & \colorbox{white}{1.430} & 2.41$\times$ \\
 & CDLM & \colorbox{white}{65.66} & 68.66 & \colorbox{white}{2.748} & 1.26$\times$ & \colorbox{white}{63.15} & 88.47 & \colorbox{medalbronze}{1.236} & 2.79$\times$ \\
 & d3LLM & \colorbox{white}{75.44} & 23.35 & \colorbox{medalsilver}{0.805} & 4.30$\times$ & \colorbox{white}{77.33} & 40.54 & \colorbox{medalgold}{0.587} & 5.88$\times$ \\
 & Original DLM & \colorbox{white}{78.62} & 256.00 & \colorbox{white}{3.457} & 1.00$\times$ & \colorbox{medalbronze}{78.85} & 256.00 & \colorbox{white}{3.452} & 1.00$\times$ \\
\midrule
MBPP & TDLM & \colorbox{medalsilver}{46.69} & 36.98 & \colorbox{medalgold}{0.517} & 6.27$\times$ & \colorbox{medalsilver}{57.98} & 31.14 & \colorbox{medalbronze}{0.467} & 7.15$\times$ \\
 & Fast-dLLM & \colorbox{white}{43.97} & 52.13 & \colorbox{medalsilver}{0.676} & 4.79$\times$ & \colorbox{medalgold}{58.37} & 40.51 & \colorbox{white}{0.559} & 5.97$\times$ \\
 & MetaState & \colorbox{medalgold}{48.64} & 63.91 & \colorbox{medalbronze}{0.917} & 3.53$\times$ & \colorbox{medalgold}{58.37}\rlap{$^{\dagger}$} & 59.88 & \colorbox{white}{0.837} & 3.99$\times$ \\
 & CDLM & \colorbox{medalbronze}{45.91} & 32.02 & \colorbox{white}{1.476} & 2.19$\times$ & \colorbox{white}{50.58} & 29.46 & \colorbox{medalsilver}{0.401} & 8.31$\times$ \\
 & d3LLM & \colorbox{white}{45.53} & 25.00 & \colorbox{white}{0.947} & 3.42$\times$ & \colorbox{medalbronze}{55.25} & 24.95 & \colorbox{medalgold}{0.351} & 9.50$\times$ \\
 & Original DLM & \colorbox{white}{44.75} & 256.00 & \colorbox{white}{3.240} & 1.00$\times$ & \colorbox{medalgold}{58.37} & 256.00 & \colorbox{white}{3.334} & 1.00$\times$ \\
\bottomrule
\end{tabular}
\caption{\textbf{Main results on two backbones.} ACC is mathematical accuracy or code pass@1; NFE and seconds are per-problem means. Speedup is relative to the original model of the same backbone on the same task. \colorbox{medalgold}{Gold}, \colorbox{medalsilver}{silver}, and \colorbox{medalbronze}{bronze} mark the three best distinct displayed values within each backbone--task group; ties share a rank. $\dagger$: scoring-only tag removal for the Dream MetaState results is described in Appendix~\ref{app:evaluation}.}
\label{tab:results256}
\label{tab:dream-results256}
\end{table}

Table~\ref{tab:results256} compares both backbones across the three tasks. TDLM achieves up to 7.15$\times$ faster generation than original DLM in the main comparison, and up to 1.85$\times$ over Fast-dLLM. Its largest score improvement over Fast-dLLM is 2.72 percentage points on LLaDA MBPP. Across the six backbone--task combinations, TDLM ranks first, second, and third in quality in one, four, and one cases, respectively; the corresponding latency counts are three, two, and one, using the table's tie convention. Quality changes remain task-dependent.

Figure~\ref{fig:gsm8k-time-accuracy} compares generation quality and latency across decoding thresholds to explore the empirical Pareto frontier. TDLM pushes the Pareto frontier, offering higher accuracy at fast speeds. Appendix~\ref{app:llada-lengths} extends the LLaDA evaluation to 128- and 384-token generation budgets, with speedups over original DLM reaching 8.54$\times$ at the longer budget.

\subsection{What does temporal history contribute?}
\label{sec:history}

We keep the LLaDA checkpoint fixed and vary the context available to the temporal modules: representations from the current evaluation only, the previous one, three or five evaluations, or the complete history. Table~\ref{tab:gsm8k-temporal-context} reports these inference-time restrictions. The current-only condition retains the learned Q/K/sink gate.

\begin{table}[tb]
\centering
\small
\begin{tabular}{lrrrr}
\toprule
Temporal context & Accuracy (\%) & NFE & Sec. & GPU (GiB) \\
\midrule
Current only (gated) & 76.48 & 42.64 & 0.623 & 15.184 \\
Last 1 & 77.62 & 44.06 & 0.637 & 15.188 \\
Last 3 & 78.00 & 42.36 & 0.631 & 15.196 \\
Last 5 & 78.68 & 41.57 & 0.601 & 15.204 \\
Full history & 79.08 & 40.08 & 0.585 & 15.330 \\
\bottomrule
\end{tabular}
\caption{\textbf{Restricting temporal context at inference.} All conditions reuse the same trained LLaDA checkpoint without retraining. Current-only retains the Q/K/sink gate. GPU memory includes weights and compiled workspaces. Implementation details are in Appendix~\ref{app:gsm8k-temporal-context}.}
\label{tab:gsm8k-temporal-context}
\end{table}

Extending history from one evaluation to the complete trajectory raises observed accuracy from 77.62\% to 79.08\%, while reducing NFE from 44.06 to 40.08 and mean time from 0.637 to 0.585 seconds. Current-only inference achieves 76.48\% accuracy with 42.64 evaluations. These measurements show how the available history changes predictions and commitment behavior for a fixed checkpoint. Implementation details and architecture comparisons are given in Appendices~\ref{app:gsm8k-temporal-context} and~\ref{app:gsm8k-architecture}.

\section{Related work}
\label{sec:related}

\paragraph{Parallel decoding and diffusion acceleration.}
Fast-dLLM~\citep{wu2025fastdllm} combines confidence-based parallel decoding with KV-cache reuse. dKV-Cache~\citep{ma2025dkvcache} uses delayed and conditioned caching to reuse key and value states during denoising. Training-based methods include CDLM~\citep{kim2025cdlm}, which combines consistency training with blockwise causal attention, and d3LLM~\citep{qian2026d3llm}, which uses pseudo-trajectory distillation and entropy-based multi-block decoding. TDLM studies position-aligned temporal retrieval as a way to make earlier denoising computations useful for later predictions.

\paragraph{Self-conditioning and persistent representations.}
Analog Bits~\citep{chen2023analogbits} conditions diffusion predictions on earlier model predictions. Recurrent Interface Networks~\citep{jabri2022rin} introduce latent self-conditioning across reverse diffusion steps for image and video generation. In language models, RCD~\citep{hu2026rcd} recycles discarded-token representations as residual context for subsequent denoising. MetaState~\citep{xia2026metastate} equips a frozen discrete diffusion model with fixed-size recurrent working memory, while ALoDLM~\citep{fang2026alodlmadaptivelyloopeddiffusion} uses token-adaptive latent recurrence. TDLM retrieves an explicit history of same-position representations at multiple layers of a frozen backbone, with simple trajectory self-distillation and causal parallel evaluation of sampled trajectory inputs.

\section{Discussion and future work}
\label{sec:limitations}

Temporal memory provides a way to use earlier denoising computations when predicting unresolved tokens. The generation and history-restriction experiments show how this changes the balance between answer quality and the number of evaluations. Further analysis of token-level correctness and confidence, together with a capacity-matched adapter using only the current hidden representation, would help characterize when history is most useful.

Self-distillation inherits the quality limitations of the teacher-generated trajectories. With a frozen backbone and teacher tokens drawn from its own outputs, supervision offers limited evidence for correcting systematic teacher errors. Teacher capability therefore provides a practical reference for the quality available through this training signal. Future work can explore higher-quality training data and full-model fine-tuning, allowing the backbone and temporal attention to adapt jointly and make better use of the accumulated memory.

Sparse teacher trajectories and the student's own multi-token commitments can expose the model to different histories. Longer responses, repeated measurements, and combinations with spatial KV caching and early stopping are additional directions for evaluation.

\section{Conclusion}

TDLM augments a frozen diffusion language model with lightweight attention over the history of unresolved positions. A simple trajectory self-distillation objective trains earlier predictions, while causal temporal attention permits parallel evaluation of sampled trajectory inputs. The available experiments show that fewer backbone evaluations can outweigh the added temporal computation at useful quality levels. Reusing denoising history thus offers a practical direction for improving generation efficiency through the model's existing capabilities.

\bibliographystyle{tdlmplainnat}
\bibliography{references}
\clearpage
\appendix
\section{Training details}
\label{app:training}

For each backbone, we train the temporal modules on 3,500 self-generated trajectories for 750 optimizer updates with the interval-only CE objective in Eq.~\eqref{eq:loss}. Backbone parameters remain frozen throughout training. The modules are initialized with a zero output projection, and sampled inputs are evaluated with causal temporal attention.

\paragraph{LLaDA.}
Temporal modules are installed at zero-based layer indices $\{1,3,5,\ldots,31\}$. Each uses four heads of width 64 and unrestricted history, adding 67,108,880 parameters, approximately 0.84\% of the nominal 8B backbone.

\paragraph{Dream.}
Temporal modules occupy the middle fourteen layers, at zero-based indices 7--20, with four heads of width 84 and unrestricted history. Predicting the token at position $p$ uses the hidden representation at position $p-1$; temporal memory and commitment masks are aligned to the response token position, including the first response token.

\paragraph{Trainable parameters and data scale.}
Table~\ref{tab:training-scale} compares the added trainable parameters and training-data volume of the LLaDA-based methods. TDLM trains 67.11M temporal-module parameters on 3,500 self-generated trajectories while keeping the backbone frozen. The data counts retain each method's unit: examples, prompts, or trajectories.

\begin{table}[htbp]
\centering
\small
\setlength{\tabcolsep}{6pt}
\renewcommand{\arraystretch}{1.15}
\begin{tabular}{@{}lrp{0.51\linewidth}@{}}
\toprule
Method & \shortstack{Added trainable\\parameters} & Training data \\
\midrule
d3LLM~\citep{qian2026d3llm} & 335.54M (LoRA) & $\approx$92k examples \\
CDLM~\citep{kim2025cdlm} & 176.13M (LoRA) & $\approx$15k prompts; trajectories collected at two temperatures per prompt \\
MetaState~\citep{xia2026metastate} & 51.08M & 15,543 instruction--response examples \\
\tdlm{} & 67.11M & 3,500 self-generated trajectories \\
\bottomrule
\end{tabular}
\caption{Added trainable parameters and training-data scale for LLaDA-based models. M denotes one million parameters.}
\label{tab:training-scale}
\end{table}

\section{Properties of temporal evaluation}
\label{app:properties}

\subsection{Temporal attention operator}
\label{app:temporal-details}

We use the notation of Section~\ref{sec:method}: $S_i$ is the selected input at decoding step $s_i$, $p$ is a token position, and $h_{\ell,p}(S_i)$ is its hidden representation at layer $\ell$. Each temporal module has $M$ heads of width $d_h$, with total width $d_t=Md_h$. Head $a$ uses bias-free projections $W^Q_{\ell,a},W^K_{\ell,a},W^V_{\ell,a}\in\R^{d_h\times d}$:
\begin{equation}
 \begin{aligned}
 q_{\ell,p,a}(S_i)&=R(r_i)W^Q_{\ell,a}h_{\ell,p}(S_i),\qquad
 k_{\ell,p,a}(S_i)=R(r_i)W^K_{\ell,a}h_{\ell,p}(S_i),\\
 v_{\ell,p,a}(S_i)&=W^V_{\ell,a}h_{\ell,p}(S_i).
 \end{aligned}
 \label{eq:qkv}
\end{equation}
$R(r)$ applies temporal RoPE within each head. Its rotation angles are $r\omega_u$, with $\omega_u=10000^{-2u/d_h}$ for $u=0,\ldots,d_h/2-1$. The relative positional term depends on $r_j-r_i$, while the hidden representations and available histories can still differ across decoding schedules.

Let $\mathcal J_p(i)$ contain the indices $j<i$ of earlier evaluated inputs retained as history for masked position $p$. The attention score and normalizer are
\begin{equation}
 e_{\ell,p,a}^{i,j}=\frac{q_{\ell,p,a}(S_i)^\top k_{\ell,p,a}(S_j)}{\sqrt{d_h}},
 \qquad
 Z_{\ell,p,a}^{(i)}=\exp(b_\ell)+\sum_{j\in\mathcal J_p(i)}\exp(e_{\ell,p,a}^{i,j}).
\end{equation}
Here $b_\ell$ is a learned sink logit shared by the heads; its value vector is zero. Each head retrieves
\begin{equation}
 z_{\ell,p,a}(S_i)=
 \sum_{j\in\mathcal J_p(i)}
 \frac{\exp(e_{\ell,p,a}^{i,j})}{Z_{\ell,p,a}^{(i)}}v_{\ell,p,a}(S_j).
 \label{eq:retrieval}
\end{equation}
The operator in Eq.~\eqref{eq:temporal} is
\begin{equation}
 \operatorname{TempAttn}_{\ell}\!\left(h_{\ell,p}(S_i),\mathcal H_{\ell,p}^{<i};r_i\right)
 =W^O_\ell\operatorname{concat}_{a=1}^{M}z_{\ell,p,a}(S_i),
 \qquad W^O_\ell\in\R^{d\times d_t}.
 \label{eq:temporal-operator}
\end{equation}
All keys and values use the residual representation before this layer's temporal correction. The resulting correction is $\Delta h_{\ell,p}(S_i)=\widetilde h_{\ell,p}(S_i)-h_{\ell,p}(S_i)$. This explicit form gives the same update as the compact operator in the main text.

\subsection{Sink, cold start, and initialization}

The sink probability in head $a$ is
\begin{equation*}
 \pi_{\varnothing,a}=\frac{\exp(b_\ell)}{Z_{\ell,p,a}^{(i)}}.
\end{equation*}
Its history weights sum to $1-\pi_{\varnothing,a}$. If $\|v_{\ell,p,a}(S_j)\|_2\leq V$ for every head and available history, then
\begin{equation}
 \|\Delta h_{\ell,p}(S_i)\|_2
 \leq m_{i,p}\|W^O_\ell\|_2 V
 \sqrt{\sum_{a=1}^{M}(1-\pi_{\varnothing,a})^2}.
 \label{eq:sink-bound}
\end{equation}
This follows directly from the triangle inequality and the nonnegative attention weights. The sink therefore gates the magnitude of the retrieved history without a separate gating network. This is a local bound on the module output, not a guarantee of whole-model stability or accuracy.

\begin{proposition}[Cold start and initialization]
For history-only temporal attention, the first forward pass of each request has zero temporal correction at every layer. If each $W^O_\ell$ is initialized to zero, the entire initialized TDLM denoiser equals the backbone on every input along any fixed trajectory, in exact arithmetic.
\end{proposition}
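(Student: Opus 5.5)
The plan is to read both claims directly off the explicit operator in Eqs.~\eqref{eq:retrieval}--\eqref{eq:temporal-operator}. A single induction over layers, run along the fixed trajectory, then propagates the layerwise identities to the whole network.

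\emph{Cold start.} For the first forward pass of a request (the input $S_0$, or the first evaluation at inference), no earlier inputs have been evaluated, so $\mathcal J_p(0)=\varnothing$ for every position $p$ and every temporal layer $\ell$. The sum in Eq.~\eqref{eq:retrieval} is then empty, giving $z_{\ell,p,a}(S_0)=0$ for every head $a$. The normalizer reduces to $Z=\exp(b_\ell)>0$, so nothing is ill-defined. Consequently $\operatorname{TempAttn}_\ell=W^O_\ell\cdot 0=0$, and Eq.~\eqref{eq:temporal} gives $\widetilde h_{\ell,p}(S_0)=h_{\ell,p}(S_0)$ at every temporal layer, whatever the value of $m_{0,p}$. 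Non-temporal layers are unchanged by construction. Inducting over layers in forward order, the pre-correction representation $h_{\ell,p}(S_0)$ entering each temporal layer equals the backbone's, because all earlier sublayers are backbone sublayers applied to identical inputs. Hence every correction is zero, and the logits coincide with the backbone's on the first pass.

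\emph{Zero initialization.} Fix a trajectory $S_0,\dots,S_{K-1}$ with whatever history sets $\mathcal J_p(i)$. Suppose every $W^O_\ell=0$. For any $i$, $p$, and $\ell$, the retrieved $z_{\ell,p,a}(S_i)$ is some finite vector, since the weights are nonnegative with $Z>0$ and the values are finite. Eq.~\eqref{eq:temporal-operator} therefore gives $\Delta h_{\ell,p}(S_i)=0$. Equivalently, apply the bound~\eqref{eq:sink-bound} with $\|W^O_\ell\|_2=0$. I will argue by strong induction on $i$ and, within each input, by induction on layer depth. Assume all representations for $S_j$, $j<i$, equal the backbone's; this is only needed to keep the keys and values well defined, not for the conclusion. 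Assume also that layers below $\ell$ for $S_i$ match the backbone. Then $h_{\ell,p}(S_i)$ matches, the correction is zero, and the feed-forward sublayer receives the backbone's input. It follows that every layer output, and hence the predictive distribution, equals the backbone's on every input along the trajectory.

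The main obstacle is only bookkeeping. The ordering convention, that keys and values are taken before the same layer's correction, must not create a circular dependency, and "exact arithmetic" must be stated explicitly, because floating-point evaluation of $0\cdot z$ is exact only when $z$ is finite. I will note finiteness explicitly. Beyond that, the argument needs no estimates.
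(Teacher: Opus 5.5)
Your proposal is correct and follows essentially the same route as the paper. An empty $\mathcal J_p(0)$ leaves only the zero-value sink, so the retrieval vanishes on the first pass; $W^O_\ell=0$ removes the correction even when history is available; and induction over layers then gives identical representations and output distributions. Your extra strong induction over inputs and your finiteness remarks are harmless bookkeeping that the paper's shorter proof leaves implicit.
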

\begin{proof}
On the first pass, $\mathcal J_p(0)$ is empty, so the only attention entry is the zero-value sink. At initialization, multiplication by $W^O_\ell=0$ removes the correction even when history is available. Induction over backbone layers gives identical representations and output distributions.
\end{proof}

\subsection{Parallel and sequential evaluation}
\begin{proposition}[Parallel evaluation on a fixed trajectory]
With dropout disabled and identical history selection, evaluating the sampled inputs together using $A$ gives the same outputs as evaluating those inputs sequentially with temporal caches, in exact arithmetic.
\end{proposition}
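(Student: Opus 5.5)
I would prove this by a double induction, outer over the layers of the backbone and inner over the sampled inputs $S_0,\ldots,S_{K-1}$ in trajectory order. The claim to carry through the induction is: for every layer $\ell$, every index $i$ and every position $p$, the residual representation entering layer $\ell$ for input $S_i$ is identical in the parallel and the sequential evaluation. Output equality then follows at the last layer, because the head is applied positionwise to each input.

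\emph{Step 1: spatial sublayers.} In both evaluation modes, the spatial attention, normalization and feed-forward sublayers act on each reconstructed input $S_i$ separately. Their parameters are frozen and dropout is disabled. So if the layer inputs agree for every $i$, the outputs of spatial attention agree as well, and in particular $h_{\ell,p}(S_i)$ agrees for every $i$ and $p$. This holds whether or not layer $\ell$ carries a temporal module. At non-temporal layers the feed-forward then yields equal outputs, which closes the inductive step there.

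\emph{Step 2: temporal layers.}
\begin{itemize}
\item By Eq.~\eqref{eq:qkv}, queries, keys and values are pure functions of $h_{\ell,p}(S_j)$ and of $r_j$, which is determined by $S_j$. They are formed before the layer's temporal correction, so by Step 1 they are identical in both modes.
\item In the sequential mode, the cache for position $p$ holds the keys and values from the earlier passes $j<i$ at which $p$ was masked, further restricted by the history selection $\mathcal J_p(i)$.
\item In the parallel mode, the mask $A_{p,i,j}=\ind[j<i]m_{j,p}$ selects exactly the same index set, intersected with the same selection rule, which the hypothesis takes to be identical.
\item Revealed tokens are never remasked. Hence for a position masked in $S_i$, the condition $m_{j,p}=1$ holds automatically for all $j<i$, so the mask reproduces precisely the sequential cache.
\item The scores, the normalizer $Z$ including the sink term $\exp(b_\ell)$, and the retrieval in Eq.~\eqref{eq:retrieval} are finite sums over this common index set with common summands. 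They are therefore equal in exact arithmetic.
\end{itemize}
Next I would handle positions where the masked attention row is empty or $p$ is revealed. The sink still appears in $Z$, so no row is degenerate. The factor $m_{i,p}$ in Eq.~\eqref{eq:temporal} zeroes the correction for revealed positions in both modes. Hence $\widetilde h_{\ell,p}(S_i)$ agrees, and so does the subsequent feed-forward output, completing the induction.

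The main obstacle is showing that no circular dependency arises. The parallel computation at layer $\ell$ must need only quantities already available at layer $\ell$, never corrected values from later inputs. I would address this explicitly by noting two facts:
\begin{itemize}
\item Keys and values come from the uncorrected $h$, which by Step 1 depends only on the layer-$\ell$ inputs.
\item The causal mask $\ind[j<i]$ excludes the current and later inputs.
\end{itemize}
Together these make the layerwise parallel schedule a valid topological order of the same computation graph that the sequential schedule evaluates. The remaining check is that the sequential cache stores pre-correction keys and values, as stated in Section~\ref{sec:method}, so both schedules compute identical expressions.
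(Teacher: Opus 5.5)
Your proposal is correct and follows essentially the same route as the paper: an induction over layers that uses three facts. Spatial sublayers act on each $S_i$ independently, keys and values are formed from the pre-correction $h_{\ell,p}$, and the causal mask $A$ selects exactly the sequentially cached entries. The inner induction over inputs is unnecessary, since the layer induction already covers all $i$ at once; your extra checks on the sink, revealed positions, and the no-remasking property are consistent with the paper's argument and more explicit than it.
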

\begin{proof}
The input embeddings agree for each $S_i$. Suppose the representations agree at the input to a layer. Spatial attention processes each sequence independently, so the post-attention representations $h_{\ell,p}(S_i)$ also agree. Keys and values are computed before the temporal correction at this layer. The parallel mask therefore selects exactly the keys and values that sequential evaluation would have cached from earlier sampled inputs. The temporal update and the unchanged feed-forward sublayer then give the same layer outputs. Induction over layers establishes the claim.
\end{proof}
This equivalence concerns a fixed set of sampled inputs. No hidden representations are computed or inserted into memory for unsampled decoding steps. Training histories and those accumulated during generation can consequently differ. Although backbone weights are frozen, gradients still propagate through later backbone computations to train the temporal modules.

\section{Evaluation details}
\label{app:evaluation}

\paragraph{Answer scoring.}
MATH-500 and GSM8K prompts request a solution ending in a boxed answer. The last boxed answer is checked for mathematical equivalence using Math-Verify 0.8.0. MBPP prompts request complete Python code with a signature derived from the reference implementation, without exposing the function body or tests. Extracted code is checked against the dataset tests with a three-second timeout and a 512 MB memory limit. Failed or unparseable answers count as incorrect. Scoring uses the text before the first EOS/EOT marker.

For the dagger-marked Dream MetaState scores in Table~\ref{tab:results256}, scoring removes only the literal strings \texttt{<tool\_call>} and \texttt{</tool\_call>}. Its raw MBPP pass@1 is 23.74\%, compared with 58.37\% after this treatment. NFE and latency use the original generations.

\paragraph{Runtime measurement.}
The experiments use PyTorch 2.9.0+cu128 and compiled generation. TDLM uses a fused temporal kernel and BF16 temporal keys and values. Timing covers the full generation budget and excludes compilation, warmup, tokenization, external text decoding, and grading. Measurements come from separate GH200 jobs, with one timed generation per problem. Native decoder-loop work is included where present.

TDLM, Fast-dLLM and original LLaDA each use eager fallback for four long MATH-500 prompts; MetaState compiles the full LLaDA test set. The Dream runs use three long-prompt fallbacks per method. These implementation details are included in the measured latency.

\clearpage
\section{Architecture ablations on GSM8K}
\label{app:gsm8k-architecture}

We compare temporal-layer allocations and widths using LLaDA-8B-Instruct on all 1,319 GSM8K test problems. All measured configurations use the main GSM8K settings in Section~\ref{sec:setup}. NFE and average seconds are means per problem; timing covers full-budget generation without EOS early stopping and excludes compilation and warmup. Measurements come from separate GH200 jobs, so small timing differences are approximate.

\subsection{Different temporal-layer allocations}

\begin{table}[htbp]
\centering
\small
\begin{tabular}{lrrr}
\toprule
Temporal layers & Accuracy (\%) & NFE & Avg. sec \\
\midrule
8, dispersed & 77.86 & 40.15 & 0.571 \\
16, middle & 79.00 & 45.72 & 0.683 \\
\textbf{16, dispersed (main)} & 79.08 & 40.08 & 0.585 \\
32, all & \multicolumn{3}{c}{Training OOM} \\
\bottomrule
\end{tabular}
\caption{Different temporal-layer allocations on GSM8K. Every temporal module uses four heads of width 64; measured models share the 750-update training recipe. }
\label{tab:gsm8k-layer-allocation}
\end{table}

Table~\ref{tab:gsm8k-layer-allocation} favors sixteen dispersed layers, used for the main LLaDA results, among the measured allocations. They achieve 79.08\% accuracy versus 77.86\% for eight dispersed layers, at similar NFE and latency. Sixteen middle layers achieve nearly identical accuracy (79.00\%), but require 45.72 NFE and 0.683 seconds, compared with 40.08 NFE and 0.585 seconds for the dispersed allocation. The thirty-two-layer configuration has no task-level measurements. Its training OOM status was supplied by the authors; a corresponding log is not included in the evidence snapshot.

\subsection{Temporal width and head configurations}

\begin{table}[htbp]
\centering
\small
\begin{tabular}{lrrrrr}
\toprule
Heads $\times$ head width & Total width & Updates & Accuracy (\%) & NFE & Avg. sec \\
\midrule
$1 \times 64$ & 64 & 500 & 76.35 & 47.34 & 0.682 \\
$1 \times 128$ & 128 & 500 & 77.71 & 47.66 & 0.701 \\
$2 \times 128$ & 256 & 750 & 78.47 & 45.84 & 0.688 \\
$4 \times 64$ & 256 & 750 & 79.00 & 45.72 & 0.683 \\
\bottomrule
\end{tabular}
\caption{Temporal-width and head configurations on GSM8K. All models augment the middle sixteen backbone layers (zero-based indices 8--23). Total width is the product of head count and head width. Training-recipe differences are described in the text.}
\label{tab:gsm8k-temporal-width}
\end{table}

In Table~\ref{tab:gsm8k-temporal-width}, the single-head runs use 500 updates, include 192-token training trajectories, and differ in length-sampling weights and lookahead scheduling. The multi-head runs use 750 updates and 128/256-token trajectories. Thus, the full table does not isolate width from training-recipe differences. The two width-256 rows do share training settings: four heads of width 64 achieve 79.00\% accuracy versus 78.47\% for two heads of width 128, with similar NFE and latency. These single-run differences do not establish statistical significance.

\clearpage
\section{Temporal context during inference on GSM8K}
\label{app:gsm8k-temporal-context}

The recorded intervention reuses the main checkpoint with sixteen dispersed layers and four heads of width 64. Temporal context is restricted to representations from the current evaluation only, the previous one, three or five evaluations, or the complete history. Current-only attention retains the learned Q/K/sink gate. All uncommitted positions retain their allowed histories across block boundaries. The evaluation follows Section~\ref{sec:setup}.

Finite windows use circular K/V slots per uncommitted position. Complete history grows in pages of eight steps, with pages returned to the allocator after commitment. A fixed device page table permits CUDA Graph reuse. GPU memory denotes the maximum question-level allocated memory after warmup, including weights and workspaces.

Table~\ref{tab:gsm8k-temporal-context} reports the available measurements. Within the historical-window conditions, accuracy increases from 77.62\% with one previous evaluation to 79.08\% with complete history, while NFE decreases from 44.06 to 40.08. Current-only uses the original temporal implementation, while the historical-window conditions use paged K/V storage. Each condition is evaluated without retraining the temporal modules.

\clearpage
\section{Generation lengths on LLaDA}
\label{app:llada-lengths}

Tables~\ref{tab:llada-results128} and~\ref{tab:llada-results384} extend the LLaDA evaluation to generation budgets of 128 and 384 tokens, respectively, with block size fixed at 32. At 128 tokens, TDLM achieves 3.50--4.84$\times$ speedups over original DLM across MATH-500, GSM8K and MBPP, with score differences of $-0.40$, $+0.65$ and $+0.38$ percentage points relative to Fast-dLLM. At 384 tokens, the corresponding speedups over original DLM are 4.81--8.54$\times$, with score differences of $+1.20$, $+0.09$ and $-0.72$ points relative to Fast-dLLM. TDLM uses fewer evaluations and less time than Fast-dLLM in all six comparisons. These results extend the measured quality--latency trade-off across generation budgets, while the MBPP decrease at 384 tokens illustrates that quality changes remain task-dependent.

\begin{table}[!htbp]
\centering
\small
\setlength{\tabcolsep}{6pt}
\setlength{\fboxsep}{1pt}
\begin{tabular}{llrrrr}
\toprule
Dataset & Model & ACC (\%) $\uparrow$ & NFE & Sec. $\downarrow$ & Speedup \\
\midrule
MATH-500 & TDLM & \colorbox{medalsilver}{36.00} & 32.60 & \colorbox{medalgold}{0.366} & 3.50$\times$ \\
 & Fast-dLLM & \colorbox{medalgold}{36.40} & 64.28 & \colorbox{medalsilver}{0.670} & 1.91$\times$ \\
 & MetaState & \colorbox{medalbronze}{35.80} & 65.14 & \colorbox{medalbronze}{0.742} & 1.72$\times$ \\
 & CDLM & 30.80 & 46.36 & 1.806 & 0.71$\times$ \\
 & d3LLM & 31.40 & 42.96 & 0.947 & 1.35$\times$ \\
 & Original DLM & \colorbox{medalsilver}{36.00} & 128.00 & 1.281 & 1.00$\times$ \\
\midrule
GSM8K & TDLM & \colorbox{medalgold}{73.77} & 26.67 & \colorbox{medalgold}{0.282} & 4.33$\times$ \\
 & Fast-dLLM & \colorbox{medalbronze}{73.09} & 52.85 & \colorbox{medalsilver}{0.520} & 2.35$\times$ \\
 & MetaState & 71.04 & 53.25 & \colorbox{medalbronze}{0.580} & 2.11$\times$ \\
 & CDLM & 2.12 & 40.54 & 1.476 & 0.83$\times$ \\
 & d3LLM & 66.41 & 37.82 & 0.698 & 1.75$\times$ \\
 & Original DLM & \colorbox{medalsilver}{73.16} & 128.00 & 1.223 & 1.00$\times$ \\
\midrule
MBPP & TDLM & \colorbox{medalgold}{45.91} & 23.82 & \colorbox{medalgold}{0.233} & 4.84$\times$ \\
 & Fast-dLLM & \colorbox{medalsilver}{45.53} & 26.96 & \colorbox{medalsilver}{0.242} & 4.67$\times$ \\
 & MetaState & \colorbox{medalsilver}{45.53} & 30.09 & \colorbox{medalbronze}{0.305} & 3.71$\times$ \\
 & CDLM & \colorbox{medalbronze}{45.14} & 25.05 & 0.919 & 1.23$\times$ \\
 & d3LLM & \colorbox{medalsilver}{45.53} & 21.02 & 0.426 & 2.65$\times$ \\
 & Original DLM & 44.75 & 128.00 & 1.130 & 1.00$\times$ \\
\bottomrule
\end{tabular}
\caption{\textbf{LLaDA with a 128-token generation budget and block size 32.} ACC is mathematical accuracy or code pass@1. Speedup is relative to original DLM on the same task at this budget. Medal colors and ties follow Table~\ref{tab:results256}. Values and speedup ratios are reported at the supplied precision.}
\label{tab:llada-results128}
\end{table}

\begin{table}[!htbp]
\centering
\small
\setlength{\tabcolsep}{6pt}
\setlength{\fboxsep}{1pt}
\begin{tabular}{llrrrr}
\toprule
Dataset & Model & ACC (\%) $\uparrow$ & NFE & Sec. $\downarrow$ & Speedup \\
\midrule
MATH-500 & TDLM & \colorbox{medalsilver}{42.80} & 70.83 & \colorbox{medalgold}{1.389} & 4.81$\times$ \\
 & Fast-dLLM & \colorbox{medalbronze}{41.60} & 134.01 & \colorbox{medalbronze}{2.443} & 2.73$\times$ \\
 & MetaState & 41.20 & 135.23 & 2.599 & 2.57$\times$ \\
 & CDLM & 31.00 & 91.48 & 3.569 & 1.87$\times$ \\
 & d3LLM & 35.80 & 42.06 & \colorbox{medalsilver}{1.618} & 4.12$\times$ \\
 & Original DLM & \colorbox{medalgold}{43.20} & 384.00 & 6.675 & 1.00$\times$ \\
\midrule
GSM8K & TDLM & \colorbox{medalbronze}{80.82} & 48.32 & \colorbox{medalgold}{0.898} & 7.09$\times$ \\
 & Fast-dLLM & 80.74 & 90.62 & \colorbox{medalbronze}{1.557} & 4.09$\times$ \\
 & MetaState & \colorbox{medalgold}{81.27} & 93.51 & 1.700 & 3.75$\times$ \\
 & CDLM & 68.84 & 72.06 & 2.811 & 2.27$\times$ \\
 & d3LLM & 78.85 & 36.21 & \colorbox{medalsilver}{1.032} & 6.18$\times$ \\
 & Original DLM & \colorbox{medalsilver}{81.20} & 384.00 & 6.370 & 1.00$\times$ \\
\midrule
MBPP & TDLM & \colorbox{medalsilver}{45.91} & 40.19 & \colorbox{medalgold}{0.725} & 8.54$\times$ \\
 & Fast-dLLM & \colorbox{medalgold}{46.30} & 51.97 & \colorbox{medalsilver}{0.864} & 7.16$\times$ \\
 & MetaState & \colorbox{medalbronze}{45.53} & 66.31 & 1.178 & 5.26$\times$ \\
 & CDLM & \colorbox{medalsilver}{45.91} & 35.03 & 1.344 & 4.61$\times$ \\
 & d3LLM & 43.58 & 28.26 & \colorbox{medalbronze}{1.098} & 5.64$\times$ \\
 & Original DLM & \colorbox{medalbronze}{45.53} & 384.00 & 6.191 & 1.00$\times$ \\
\bottomrule
\end{tabular}
\caption{\textbf{LLaDA with a 384-token generation budget and block size 32.} ACC is mathematical accuracy or code pass@1. Speedup is relative to original DLM on the same task at this budget. Medal colors and ties follow Table~\ref{tab:results256}. Values and speedup ratios are reported at the supplied precision.}
\label{tab:llada-results384}
\end{table}

\end{document}